\theoremstyle{plain}
 \newtheorem{theorem}{Theorem}
 \newtheorem{corollary}{Corollary}
 \newtheorem{proposition}{Proposition}
\theoremstyle{definition}
 \newtheorem{definition}{Definition}
 \newtheorem{example}{Example}
\def\naf{\ensuremath{\raise.17ex\hbox{\ensuremath{\scriptstyle\mathtt{\sim}}}}\xspace}
\def\BP{\ensuremath{\mathcal{B}}\xspace}
\newcounter{generalizedatom}
\newcommand{\generalizedatomlabel}[1]{{\refstepcounter{generalizedatom}\label{#1}\arabic{generalizedatom}}}
\newcounter{program}
\newcommand{\programlabel}[1]{{\refstepcounter{program}\label{#1}\arabic{program}}}
\title{Semantics and Compilation of Answer Set Programming with Generalized Atoms}
\author{Mario Alviano\\
University of Calabria, Italy\\
mario@alviano.com\\
\And
Wolfgang Faber\\
University of Huddersfield, UK\\
wf@wfaber.com}
\begin{document}
\nocopyright
\maketitle

\begin{abstract}
Answer Set Programming (ASP) is logic programming under the stable
model or answer set semantics. During the last decade, this paradigm
has seen several extensions by generalizing the notion of atom used in
these programs. Among these, there are aggregate atoms, HEX atoms,
generalized quantifiers, and abstract constraints. In this paper we
refer to these constructs collectively as generalized atoms. The idea
common to all of these constructs is that their satisfaction depends
on the truth values of a set of (non-generalized) atoms, rather than
the truth value of a single (non-generalized) atom. Motivated by
several examples, we argue that for some of the more intricate
generalized atoms, the previously suggested semantics provide
unintuitive results and provide an alternative semantics, which we
call supportedly stable or SFLP answer sets. We show that it is equivalent to the major
previously proposed semantics for programs with convex generalized
atoms, and that it in general admits more intended models than other
semantics in the presence of non-convex generalized atoms. We show
that the complexity of supportedly stable models is on the second
level of the polynomial hierarchy, similar to previous proposals and
to stable models of disjunctive logic programs. Given these complexity
results, we provide a compilation method that compactly transforms
programs with generalized atoms in disjunctive normal form
to programs without generalized
atoms. Variants are given for the new supportedly stable and the
existing FLP semantics, for which a similar compilation technique has
not been known so far.
\end{abstract}

\section{Introduction}

Answer Set Programming (ASP) is a widely used problem-solving
framework based on logic programming under the stable model
semantics. The basic language relies on Datalog with negation in rule
bodies and possibly disjunction in rule heads. When actually using the
language for representing practical knowledge, it became apparent that
generalizations of the basic language are necessary for
usability. Among the suggested extensions are aggregate atoms (similar
to aggregations in database queries)
\cite{niem-etal-99,niem-simo-2000,dell-etal-2003a,fabe-etal-2008-tplp}
and atoms that rely on external truth valuations
\cite{cali-etal-2007-amai,eite-etal-2004-KR,eite-etal-2005-ijcai}. These
extensions are characterized by the fact that deciding the truth
values of the new kinds of atoms depends on the truth values of a set
of traditional atoms rather than a single traditional atom. We will
refer to such atoms as generalized atoms, which cover also several
other extensions such as abstract constraints, generalized
quantifiers, and HEX atoms.

Concerning semantics for programs containing generalized atoms, there
have been several different suggestions. All of these appear to
coincide for programs that do not contain generalized atoms in
recursive definitions. The two main semantics that emerged as
standards are the PSP semantics defined in
\cite{pelo-2004,pelo-etal-2007-tplp} and \cite{son-pont-2007-tplp},
and the FLP semantics defined in
\cite{fabe-etal-2004-jelia,fabe-etal-2011-aij}. In a recent paper
\cite{alvi-fabe-2013-lpnmr} the relationship between these two
semantics was analyzed in detail; among other, more intricate results,
it was shown that the semantics coincide up to convex generalized
atoms. It was already established earlier that each PSP answer set is
also an FLP answer set, but not vice versa. So for programs containing
non-convex generalized atoms, some FLP answer sets are not PSP answer
sets. In particular, there are programs that have FLP answer sets but
no PSP answer sets.

In this paper, we argue that the FLP semantics is still too
restrictive, and some programs that do not have any FLP answer set
should instead have answer sets. In order to illustrate the point,
consider a coordination game that is remotely inspired by the
prisoners' dilemma. There are two players, each of which has the
option to confess or defect. Let us also assume that both players have
a fixed strategy already, which however still depends on the choice of
the other player as well. In particular, each player will confess
exactly if both players choose the same option, that is, if both
players confess or both defect. The resulting program is $P_1$ in
Example~\ref{ex:program}, where $a$ means that the first player
confesses and $b$ means that the second player confesses. As will be
explained later, the FLP semantics does not assign any answer set to
this program, and therefore also the PSP semantics will not assign any
answer sets to this program. However, this is peculiar, as the
scenario in which both players confess seems like a reasonable one;
indeed, even a simple inflationary operator would result in this
solution.

Looking at the reason why this is not an FLP answer set, we observe
that it has two countermodels that prevent it from being an answer
set: One in which only the first player confesses, and another one in
which only the second player confesses. Both of these countermodels
are models in the classical sense, but they are weak in the sense that
they are not supported, meaning that there is no rule justifying their
truth. This is a situation that does not occur for aggregate-free
programs, which always have supported countermodels. We argue that one
needs to look at supported countermodels, instead of looking at
minimal countermodels. It turns out that doing this yields the same
results not only for aggregate-free programs, but also for programs
containing convex aggregates, which we believe is the reason why this
issue has not been noticed earlier.

In this paper, we define a new semantics along these lines and call it
supportedly stable or SFLP (supportedly FLP) semantics. It provides
answer sets for more programs than FLP and PSP, but is shown to be
equal on convex programs. Analyzing the computational complexity of
the new semantics, we show that it is in the same classes as the FLP
and PSP semantics when considering polynomial-time computable
generalized atoms. It should also be mentioned that the new semantics
has its peculiarities, for instance adding ``tautological'' rules like
$a \leftarrow a$ can change the semantics of the program.

This complexity result directly leads us to the second contribution of
this paper. While it has been known for quite some time that the
complexity of programs with generalized atoms (even without
disjunctions) is equal to the complexity of disjunctive programs, no
compact transformation from programs with generalized atoms to
disjunctive standard programs is known yet. We provide a contribution with 
this respect and show
how to achieve such a compact compilation for both FLP and SFLP
semantics when non-convex aggregates are in disjunctive normal form.
It hinges on the use of disjunction and fresh symbols to
capture satisfaction of a generalized atom.

The remainder of this paper is structured as follows. In the next
section, we present the syntax and FLP semantics for programs with
generalized atoms. After that, we analyze issues with the FLP
semantics and define the SFLP semantics, followed by a section that
proves several useful properties of the new semantics. The subsequent
section then deals with compiling programs with generalized atoms into
generalized-atom-free programs, followed by conclusions.

\section{Syntax and FLP Semantics}

In this section we present the syntax used in this paper and present
the FLP semantics \cite{fabe-etal-2004-jelia,fabe-etal-2011-aij}. To
ease the presentation, we will directly describe a propositional
language here. This can be easily extended to the more usual ASP
notations of programs involving variables, which stand for their
ground versions (that are equivalent to a propositional program).

\subsection{Syntax}

Let \BP be a countable set of \emph{propositional atoms}.

\begin{definition}
  A \emph{generalized atom} $A$ on \BP is a mapping from $2^\BP$ to Boolean truth values.
Each generalized atom $A$ has an associated, finite\footnote{In principle, we could also consider infinite domains, but refrain to do so for simplicity.} domain $D_A \subseteq \BP$, indicating those propositional atoms that are relevant to the generalized atom.
\end{definition}

\begin{example}\label{ex:generalizedatoms}
A generalized atom $A_\generalizedatomlabel{generalizedatom:1}$ modeling a conjunction $a_1, \ldots, a_n$ ($n \geq 0$) of propositional atoms is such that $D_{A_{\ref{generalizedatom:1}}} = \{a_1, \ldots, a_n\}$ and, for every $I \subseteq \BP$, $A_{\ref{generalizedatom:1}}$ maps $I$ to true if and only if $D_{A_{\ref{generalizedatom:1}}} \subseteq I$.

A generalized atom $A_\generalizedatomlabel{generalizedatom:2}$ modeling a conjunction $a_1, \ldots, a_m, \naf a_{m+1}, \ldots, \naf a_n$ ($n \geq m \geq 0$) of literals, where $a_1, \ldots, a_n$ are propositional atoms and $\naf$ denotes \emph{negation as failure}, is such that $D_{A_{\ref{generalizedatom:2}}} = \{a_1, \ldots, a_n\}$ and, for every $I \subseteq \BP$, $A_{\ref{generalizedatom:2}}$ maps $I$ to true if and only if $\{a_1, \ldots, a_m\} \subseteq I$ and $\{a_{m+1}, \ldots, a_n\} \cap I = \emptyset$.

A generalized atom $A_\generalizedatomlabel{generalizedatom:3}$ modeling an aggregate $\mathit{COUNT}(\{a_1, \ldots, a_n\}) \neq k$ ($n \geq k \geq 0$), where $a_1, \ldots, a_n$ are propositional atoms, is such that $D_{A_{\ref{generalizedatom:3}}} = \{a_1, \ldots, a_n\}$ and, for every $I \subseteq \BP$, $A_{\ref{generalizedatom:3}}$ maps $I$ to true if and only if $|D_{A_{\ref{generalizedatom:3}}} \cap I| \neq k$.
\end{example}

In the following, when convenient, we will represent generalized atoms as conjunctions of literals or aggregate atoms.
Subsets of \BP mapped to true by such generalized atoms will be those satisfying the associated conjunction.

\begin{definition}
A general rule $r$ is of the following form:
\begin{equation}\label{eq:rule}
H(r) \leftarrow B(r)
\end{equation}
where $H(r)$ is a disjunction $a_1 \vee \cdots \vee a_n$ ($n \geq 0)$ of propositional atoms in $\BP$ referred to as the head of
$r$, and $B(r)$ is a generalized atom on $\BP$ called the body of $r$.
For convenience, $H(r)$ is sometimes considered a set of propositional atoms.
\end{definition}

A general program $P$ is a set of general rules.

\begin{example}\label{ex:program}
Consider the following rules:
\begin{eqnarray*}
r_1:\quad a & \leftarrow & \mathit{COUNT}(\{a, b\}) \neq 1\\
r_2:\quad b & \leftarrow & \mathit{COUNT}(\{a, b\}) \neq 1
\end{eqnarray*}
The following are general programs:
\begin{eqnarray*}
P_{\programlabel{program:1}} & := & \{r_1; r_2\}\\
P_{\programlabel{program:2}} & := & \{r_1; r_2; a \leftarrow b; b \leftarrow a\}\\
P_{\programlabel{program:3}} & := & \{r_1; r_2; \leftarrow \naf a; \leftarrow \naf b\}\\
P_{\programlabel{program:4}} & := & \{r_1; r_2; a \vee b \leftarrow\}  \\
P_{\programlabel{program:5}} & := & \{r_1; r_2; a \leftarrow \naf b\}  
\end{eqnarray*}
\end{example}

\subsection{FLP Semantics}

An \emph{interpretation} $I$ is a subset of \BP.
$I$ is a \emph{model} for a generalized atom $A$, denoted $I \models A$, if $A$ maps $I$ to true.
Otherwise, if $A$ maps $I$ to false, $I$ is not a model of $A$, denoted $I \not\models A$.
%
$I$ is a model of a rule $r$ of the form (\ref{eq:rule}), denoted $I \models r$, if $H(r) \cap I \neq \emptyset$ whenever $I \models B(r)$.
$I$ is a model of a program $P$, denoted $I \models P$, if $I \models r$ for every rule $r \in P$.

Generalized atoms can be partitioned into two classes according to the following definition.
\begin{definition}[Convex Generalized Atoms]
A generalized atom $A$ is convex if for all triples $I,J,K$ of interpretations such that $I \subset J \subset K$, $I \models A$ and $K \models A$ implies $J \models A$.
\end{definition}
Note that convex generalized atoms are closed under conjunction (but not under disjunction or negation).
A convex program is a general program whose rules have convex bodies.

We now describe a reduct-based semantics, usually referred to as FLP, which has been introduced and analyzed in \cite{fabe-etal-2004-jelia,fabe-etal-2011-aij}.

\begin{definition}[FLP Reduct]
The FLP reduct $P^I$ of a program $P$ with respect to $I$ is defined as the set $\{r \in P \mid I \models B(r)\}$.
\end{definition}

\begin{definition}[FLP Answer Sets]
$I$ is an FLP answer set of $P$ if $I \models P$ and for each $J \subset I$ it holds that $J \not\models P^I$.
Let $FLP(P)$ denote the set of FLP answer sets of $P$.
\end{definition}

\begin{example}\label{ex:models}
Consider the programs from Example~\ref{ex:program}.
The models of $P_{\ref{program:1}}$ are $\{a\}$, $\{b\}$ and $\{a,b\}$, none of which is an FLP answer set.
Indeed, 
\[P_{\ref{program:1}}^{\{a\}} = P_{\ref{program:1}}^{\{b\}} = \emptyset,\]
which have the trivial model $\emptyset$, which is of course a subset of $\{a\}$ and $\{b\}$. On the other hand
\[
P_{\ref{program:1}}^{\{a,b\}} = P_{\ref{program:1}},
\]
and so 
\[
\{a\} \models P_{\ref{program:1}}^{\{a,b\}},
\]
where $\{a\} \subset \{a,b\}$. We will discuss in the next section why this is a questionable situation.

Concerning $P_{\ref{program:2}}$, it has one model, namely $\{a,b\}$, which is also its unique FLP answer set.
Indeed, \[P_{\ref{program:2}}^{\{a,b\}} = P_{\ref{program:2}},\] and hence the only model of $P_{\ref{program:2}}^{\{a,b\}}$ is $\{a,b\}$.

Interpretation $\{a,b\}$ is also the unique model of program $P_{\ref{program:3}}$, which however has no FLP answer set.
Here,
\[P_{\ref{program:3}}^{\{a,b\}} = P_{\ref{program:1}},\] hence similar to $P_{\ref{program:1}}$, \[\{a\} \models P_{\ref{program:3}}^{\{a,b\}}\] and $\{a\} \subset \{a,b\}$.

$P_{\ref{program:4}}$ instead has two FLP answer sets, namely $\{a\}$ and $\{b\}$, and a further model $\{a,b\}$.
In this case, \[P_{\ref{program:4}}^{\{a\}} = \{a \vee b \leftarrow\},\] and no proper subset of $\{a\}$ satisfies it. Also \[P_{\ref{program:4}}^{\{b\}} = \{a \vee b \leftarrow\},\] and no proper subset of $\{b\}$ satisfies it.
Instead, for $\{a,b\}$, we have 
\[P_{\ref{program:4}}^{\{a,b\}}=P_{\ref{program:4}},\]
and hence \[\{a\} \models P_{\ref{program:4}}^{\{a,b\}}\] and $\{a\} \subset \{a,b\}$.

Finally, $P_{\ref{program:5}}$ has tree models, $\{a\}$, $\{b\}$ and $\{a,b\}$, but only one answer set, namely $\{a\}$.
In fact, $P_{\ref{program:5}}^{\{a\}} = \{a \leftarrow \naf b\}$ and $\emptyset$ is not a model of the reduct.
On the other hand, $\emptyset$ is a model of $P_{\ref{program:5}}^{\{b\}} = \emptyset$,
and $\{a\}$ is a model of $P_{\ref{program:5}}^{\{a,b\}} = P_{\ref{program:1}}$.
\end{example}

\section{SFLP Semantics}

As noted in the introduction, the fact that $P_{\ref{program:1}}$ has
no FLP answer sets is striking. If we first assume that both $a$ and
$b$ are false (interpretation $\emptyset$), and then apply a
generalization of the well-known one-step derivability operator, we
obtain truth of both $a$ and $b$ (interpretation $\{a,b\}$). Applying
this operator once more again yields the same interpretation, a
fix-point. $\{a,b\}$ is also a supported model, that is, for all true
atoms there exists a rule in which this atom is the only true head
atom, and in which the body is true. 

It is instructive to examine why this seemingly robust model is not an
FLP answer set. Its reduct is equal to the original program,
$P_{\ref{program:1}}^{\{a,b\}} = P_{\ref{program:1}}$. There are
therefore two models of $P_{\ref{program:1}}$, $\{a\}$ and $\{b\}$,
that are subsets of $\{a,b\}$ and therefore inhibit $\{a,b\}$ from
being an FLP answer set. The problem is that, contrary to $\{a,b\}$,
these two models are rather weak, in the sense that they are not
supported. Indeed, when considering $\{a\}$, there is no rule in
$P_{\ref{program:1}}$ such that $a$ is the only true atom in the rule
head and the body is true in $\{a\}$: The only available rule with $a$
in the head has a false body. The situation for $\{b\}$ is symmetric.

It is somewhat counter-intuitive that a model like $\{a,b\}$ should be
inhibited by two weak models like $\{a\}$ and $\{b\}$. Indeed, this is
a situation that normally does not occur in ASP. For programs that do
not contain generalized atoms, whenever one finds a $J \subseteq I$
such that $J \models P^I$ there is for sure also a $K \subseteq I$
such that $K \models P^I$ and $K$ is supported. Indeed, we will show
in the following section that this is the case also for programs
containing only convex generalized atoms. Our feeling is that since such
a situation does not happen for a very wide set of programs, it has
been overlooked so far.

We will now attempt to repair this kind of anomaly by stipulating that
one should only consider supported models for finding inhibitors of
answer sets. In other words, one does not need to worry about
unsupported models of the reduct, even if they are subsets of the
candidate. Let us first define supported models explicitly.

\begin{definition}[Supportedness]
A model $I$ of a program $P$ is supported if for each $a \in I$ there is a rule $r \in P$ such that $I \cap H(r) = \{a\}$ and $I \models B(r)$.
In this case we will write $I \models_s P$.  
\end{definition}

\begin{example}\label{ex:supp-models}
Continuing Example~\ref{ex:models}, programs $P_{\ref{program:1}}$, $P_{\ref{program:2}}$, and $P_{\ref{program:3}}$ have one supported model, namely $\{a,b\}$.
The model $\{a\}$ of $P_{\ref{program:1}}$ is not supported because the body of the the rule with $a$ in the head has a false body with respect to $\{a\}$. For a symmetric argument, model $\{b\}$ of $P_{\ref{program:1}}$ is not supported either.
The supported models of $P_{\ref{program:4}}$, instead, are $\{a\}$, $\{b\}$, and $\{a,b\}$, so all models of the program are supported. Note that both models $\{a\}$ and $\{b\}$ have the disjunctive rule as the only supporting rule for the respective single true atom, while for $\{a,b\}$, the two rules with generalized atoms serve as supporting rules for $a$ and $b$.
Finally, the supported models of $P_{\ref{program:5}}$ are $\{a\}$ and $\{a,b\}$.
\end{example}

We are now ready to formally introduce the new semantics. In this paper we will normally refer to it as SFLP answer sets or SFLP semantics, but also call it supportedly stable models occasionally.

\begin{definition}[SFLP Answer Sets]\label{def:SFLP}
$I$ is a supportedly FLP answer set (or SFLP answer set, or supportedly stable model) of $P$ if $I \models_s P$ and for each $J \subset I$ it holds that $J \not\models_s P^I$.
Let $SFLP(P)$ denote the set of SFLP answer sets of $P$.
\end{definition}

\begin{example}\label{ex:SFLP}
Consider again the programs from Example~\ref{ex:program}.
Recall that $P_{\ref{program:1}}$ has only one supported model, namely $\{a,b\}$, and
\[
P_{\ref{program:1}}^{\{a,b\}} = P_{\ref{program:1}},
\]
but 
\[
\begin{array}{l}
\emptyset \not\models_s P_{\ref{program:1}}^{\{a,b\}},\\
\{a\} \not\models_s P_{\ref{program:1}}^{\{a,b\}},\\
\{b\} \not\models_s P_{\ref{program:1}}^{\{a,b\}},  
\end{array}
\]
therefore no proper subset of $\{a,b\}$ is a supported model, hence it is an SFLP answer set.

Concerning $P_{\ref{program:2}}$, it has one model, namely $\{a,b\}$, which is supported and also its unique SFLP answer set.
Indeed, recall that \[P_{\ref{program:2}}^{\{a,b\}} = P_{\ref{program:2}},\] and hence no proper subset of $\{a,b\}$ can be a model (let alone a supported model) of $P_{\ref{program:2}}^{\{a,b\}}$.

Interpretation $\{a,b\}$ is the unique model of program $P_{\ref{program:3}}$, which is supported and also its SFLP answer set.
In fact
\[P_{\ref{program:3}}^{\{a,b\}} = P_{\ref{program:1}}.\]

$P_{\ref{program:4}}$ has two SFLP answer sets, namely $\{a\}$ and $\{b\}$.
In this case, recall \[P_{\ref{program:4}}^{\{a\}} = \{a \vee b \leftarrow\},\] and no proper subset of $\{a\}$ satisfies it. Also \[P_{\ref{program:4}}^{\{b\}} = \{a \vee b \leftarrow\},\] and no proper subset of $\{b\}$ satisfies it. Instead, for $\{a,b\}$, we have 
\[P_{\ref{program:4}}^{\{a,b\}}=P_{\ref{program:4}},\]
hence since 
\[
\begin{array}{l}
\{a\} \models_s P_{\ref{program:4}}^{\{a,b\}},\\
\{b\} \models_s P_{\ref{program:4}}^{\{a,b\}},  
\end{array}
\]
we obtain that $\{a,b\}$ is not an SFLP answer set.

Finally, $P_{\ref{program:5}}$ has two SFLP answer sets, namely $\{a\}$ and $\{a,b\}$.
In fact, $P_{\ref{program:5}}^{\{a\}} = \{a \leftarrow \naf b\}$ and $P_{\ref{program:5}}^{\{a,b\}} = P_{\ref{program:1}}$.

The programs, models, FLP answer sets, supported models, and SFLP answer sets are summarized in Table~\ref{table:models}.
\end{example}

\begin{table*}
 \caption{(Supported) models and (S)FLP answer sets of programs in Example~\ref{ex:program}, where $A := \mathit{COUNT}(\{a, b\}) \neq 1$.}
\label{table:models}
 
 \medskip
 \begin{center}
   
 \begin{tabular}{llllll}
  & \bf Rules & \bf Models & \bf FLP & \bf Supported Models & \bf SFLP \\
  \hline
  \hline
  $P_1$ & $a \leftarrow A$ \quad $b \leftarrow A$ & $\{a\}$, $\{b\}$, $\{a,b\}$ & --- & $\{a,b\}$ & $\{a,b\}$ \\
  \hline
  $P_2$ & $a \leftarrow A$ \quad $b \leftarrow A$ \quad $a \leftarrow b$ \quad $b \leftarrow a$ & $\{a,b\}$ & $\{a,b\}$ & $\{a,b\}$ & $\{a,b\}$ \\
  \hline
  $P_3$ & $a \leftarrow A$ \quad $b \leftarrow A$ \quad $\leftarrow \naf a$ \quad $\leftarrow \naf b$ & $\{a,b\}$ & --- & $\{a,b\}$ & $\{a,b\}$ \\
  \hline
  $P_4$ & $a \leftarrow A$ \quad $b \leftarrow A$ \quad $a \vee b \leftarrow$ & $\{a\}$, $\{b\}$, $\{a,b\}$ & $\{a\}$, $\{b\}$ & $\{a\}$, $\{b\}$, $\{a,b\}$ & $\{a\}$, $\{b\}$ \\
  \hline
  $P_5$ & $a \leftarrow A$ \quad $b \leftarrow A$ \quad $a \leftarrow \naf b$ & $\{a\}$, $\{b\}$, $\{a,b\}$ & $\{a\}$ & $\{a\}$, $\{a,b\}$ & $\{a\}$, $\{a,b\}$ \\
 \end{tabular}
 \end{center}
\end{table*}

An alternative, useful characterization of SFLP answer sets can be
given in terms of Clark's completion \cite{clar-78}.  In fact, it is
well-known that supported models of a program are precisely the models
of its completion. We define this notion in a somewhat non-standard
way, making use of the concept of generalized atom.

Next, we first define the completion of a propositional atom $a$ with respect to a general program $P$ as a generalized atom encoding the supportedness condition for $a$.

\begin{definition}
The completion of a propositional atom $a \in \BP$ with respect to a general program $P$ is a generalized atom $comp(a,P)$ mapping to true any interpretation $I$ containing $a$ and such that
there is no rule $r \in P$ for which 
$I \models B(r)$ and
$I \cap H(r) = \{a\}$.
\end{definition}

These generalized atoms are then used to effectively define a program whose models are the supported model of $P$.

\begin{definition}
The completion of a general program $P$ is a general program $comp(P)$ extending $P$ with a rule
$$\leftarrow comp(a,P)$$
for each propositional atom $a$ occurring in $P$.  
\end{definition}

\begin{example}
Consider again programs from Example~\ref{ex:program}.
Program $comp(P_{\ref{program:1}})$ extends $P_{\ref{program:1}}$ with the following rules:
\begin{eqnarray*}
 & \leftarrow & a,\ \mathit{COUNT}(\{a, b\}) = 1\\
 & \leftarrow & b,\ \mathit{COUNT}(\{a, b\}) = 1
\end{eqnarray*}
Program $comp(P_{\ref{program:2}})$ extends $P_{\ref{program:2}}$ with the following rules:
\begin{eqnarray*}
 & \leftarrow & a,\ \mathit{COUNT}(\{a, b\}) = 1,\ \naf b\\
 & \leftarrow & b,\ \mathit{COUNT}(\{a, b\}) = 1,\ \naf a
\end{eqnarray*}
Program $comp(P_{\ref{program:3}})$ is equal to $comp(P_{\ref{program:1}})$, and program $comp(P_{\ref{program:4}})$ extends $P_{\ref{program:4}}$ with the following rules:
\begin{eqnarray*}
 & \leftarrow & a,\ \mathit{COUNT}(\{a, b\}) = 1,\ b\\
 & \leftarrow & b,\ \mathit{COUNT}(\{a, b\}) = 1,\ a
\end{eqnarray*}
Program $comp(P_{\ref{program:5}})$ instead extends $P_{\ref{program:5}}$ with the following rules:
\begin{eqnarray*}
 & \leftarrow & a,\ \mathit{COUNT}(\{a, b\}) = 1,\ b\\
 & \leftarrow & b,\ \mathit{COUNT}(\{a, b\}) = 1
\end{eqnarray*}
The only model of $comp(P_{\ref{program:1}})$, $comp(P_{\ref{program:2}})$, and $comp(P_{\ref{program:3}})$ is $\{a,b\}$.
The models of $comp(P_{\ref{program:4}})$ and $comp(P_{\ref{program:5}})$ instead are $\{a\}$, $\{b\}$, and $\{a,b\}$.
\end{example}

\begin{proposition}
Let $P$ be a general program and $I$ an interpretation.  $I \models_s P$ iff $I \models comp(P)$.
\end{proposition}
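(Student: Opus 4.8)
The plan is to prove the equivalence by a straightforward definition chase, unfolding both sides into elementary conditions and checking that they coincide. Write $comp(P) = P \cup \{(\leftarrow comp(a,P)) \mid a \text{ occurs in } P\}$. Since $P \subseteq comp(P)$, every model of $comp(P)$ is a model of $P$; conversely $I \models comp(P)$ holds exactly when $I \models P$ and, in addition, $I$ satisfies each added constraint $\leftarrow comp(a,P)$. A rule with empty head is, by the satisfaction definition, modelled by $I$ precisely when its body is false in $I$, so $I$ satisfies $\leftarrow comp(a,P)$ iff $I \not\models comp(a,P)$. Then I would unfold $comp(a,P)$ itself: by its definition $I \models comp(a,P)$ iff $a \in I$ and there is \emph{no} rule $r \in P$ with $I \models B(r)$ and $I \cap H(r) = \{a\}$; negating, $I \not\models comp(a,P)$ iff $a \notin I$ or there exists a rule $r \in P$ with $I \models B(r)$ and $I \cap H(r) = \{a\}$, i.e.\ a rule \emph{supporting} $a$ in $I$. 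Combining, $I \models comp(P)$ iff $I \models P$ and, for every atom $a$ occurring in $P$, $a \in I$ implies some rule of $P$ supports $a$ in $I$.

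Comparing this with the definition of supportedness — $I \models_s P$ iff $I \models P$ and for every $a \in I$ some rule of $P$ supports $a$ in $I$ — the two conditions are the same once one checks that the quantification over $a$ may be restricted to atoms occurring in $P$. The forward direction is immediate: if $I \models_s P$ then $I \models P$ and each $a \in I$, in particular each such $a$ occurring in $P$, has a supporting rule, so $I \models comp(P)$. For the converse, observe that a rule $r$ supporting $a$ in $I$ satisfies $a \in H(r)$, so any $a \in I$ that admits a supporting rule necessarily occurs in $P$; hence if $I \models comp(P)$ and $a \in I$, then either $a$ occurs in $P$ and has a supporting rule by the unfolded condition above, or $a$ occurs nowhere in $P$, which is excluded by the standing convention that interpretations range over the atoms of the program under consideration (equivalently, atoms absent from $P$ are irrelevant to both $\models_s$ and $comp(P)$ and may be removed from $I$). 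Under this convention $I \models comp(P)$ yields $I \models_s P$, completing the argument.

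I do not expect a real obstacle: the statement is essentially folklore (supported models $=$ models of Clark's completion), recast here through generalized atoms. The only points needing care are the semantics of the empty head in the added constraints $\leftarrow comp(a,P)$, and the matching of the two atom universes; the latter I would handle by making the interpretation convention explicit at this point, so that the stated ``iff'' is literally correct rather than merely correct up to atoms irrelevant to $P$.
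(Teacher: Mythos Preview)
Your proposal is correct; the paper itself does not spell out a proof, merely remarking that the characterization ``follows directly from \cite{clar-78}''. Your explicit definition chase is therefore more detailed than what the paper provides, and your attention to the match-up between the atoms quantified over in $\models_s$ and those for which $comp(P)$ adds constraints is exactly the subtlety that needs flagging (without restricting $I$ to atoms of $P$, the stated biconditional would indeed fail in the $\Leftarrow$ direction).
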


This characterization (which follows directly from \cite{clar-78}) provides us with a means for implementation that relies only on model checks, rather than supportedness checks.

\begin{proposition}
  Let $P$ be a general program and $I$ an interpretation. 
$I$ is a supportedly FLP answer set of $P$ if $I \models comp(P)$ and for each $J \subset I$ it holds that $J \not\models comp(P^I)$.
\end{proposition}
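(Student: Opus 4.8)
The plan is to derive the statement almost immediately from the preceding proposition, which identifies supported models with models of the completion, together with Definition~\ref{def:SFLP}. Recall that, by Definition~\ref{def:SFLP}, $I$ is an SFLP answer set of $P$ exactly when $I \models_s P$ and $J \not\models_s P^I$ for every $J \subset I$. Hence it suffices to translate each of these two conditions, via the completion, into the two conditions appearing in the statement.

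For the first condition, I would apply the preceding proposition directly to $P$ and $I$: it gives $I \models_s P$ iff $I \models comp(P)$. So the hypothesis $I \models comp(P)$ yields $I \models_s P$, which is the supportedness requirement of Definition~\ref{def:SFLP}.

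For the second condition, the key observation is that the FLP reduct $P^I$ is itself a general program, so the preceding proposition applies verbatim with $P^I$ playing the role of $P$: for every interpretation $J$ we have $J \models_s P^I$ iff $J \models comp(P^I)$. Taking contrapositives and restricting the quantifier to $J \subset I$, the hypothesis ``$J \not\models comp(P^I)$ for each $J \subset I$'' is equivalent to ``$J \not\models_s P^I$ for each $J \subset I$''. Combining this with the first step and Definition~\ref{def:SFLP}, $I$ is an SFLP answer set of $P$, as desired.

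I do not expect a genuine obstacle here; the argument is a double invocation of the preceding proposition plus an unfolding of the definition. The one point that deserves a word of care is that $comp(P^I)$ must be read as the completion of the program $P^I$ — in particular, the auxiliary constraints $\leftarrow comp(a,P^I)$ range over the atoms occurring in $P^I$ rather than those occurring in $P$ — since this is precisely the form in which the preceding proposition is stated and is what makes the supportedness notion for $P^I$ agree with its completion. Finally, I would remark that every step above is an equivalence, so the converse implication in fact also holds and the ``if'' in the statement could be strengthened to ``iff''.
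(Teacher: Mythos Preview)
Your proposal is correct and matches the paper's approach: the paper does not give an explicit proof for this proposition, merely noting that it follows directly from the preceding one (the equivalence $I \models_s P$ iff $I \models comp(P)$), which is precisely the double invocation you carry out. Your remarks that the statement in fact holds as an ``iff'' and that $comp(P^I)$ must be read as the completion of the reduct program are apt and do not deviate from the intended argument.
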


\section{Properties}

The new semantics has a number of interesting properties that we report in this section.
First of all, it is an extension of the FLP semantics, in the sense that each FLP answer set is also an SFLP answer set.

\begin{theorem}\label{thm:extension}
Let $P$ be a general program.
$FLP(P) \subseteq SFLP(P)$.
\end{theorem}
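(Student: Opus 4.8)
The plan is to show that any $I \in FLP(P)$ satisfies the two defining conditions of Definition~\ref{def:SFLP}: that $I \models_s P$, and that no proper subset $J \subset I$ satisfies $J \models_s P^I$. The second condition is immediate: if $I \in FLP(P)$, then by definition $J \not\models P^I$ for every $J \subset I$, and since $J \models_s P^I$ implies $J \models P^I$ (supportedness is a strengthening of modelhood), we get $J \not\models_s P^I$ for all $J \subset I$ as well. So the whole burden of the proof is the first condition: an FLP answer set is a supported model of $P$.

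For that first part, I would argue by contradiction. Suppose $I \models P$ but $I \not\models_s P$. Then there is some atom $a \in I$ for which no rule $r \in P$ has both $I \models B(r)$ and $I \cap H(r) = \{a\}$. The natural candidate witness against $I$ being an FLP answer set is $J := I \setminus \{a\}$. I need to check $J \models P^I$, i.e., $J \models r$ for every $r \in P$ with $I \models B(r)$. Take such a rule $r$; since $I \models P$, we have $I \cap H(r) \neq \emptyset$. If $a \notin H(r)$, then $I \cap H(r) = J \cap H(r) \neq \emptyset$, so $J \models r$. If $a \in H(r)$, then because $r$ does not support $a$ in $I$ while $I \models B(r)$, it must be that $I \cap H(r) \neq \{a\}$, so $I \cap H(r)$ contains some atom $b \neq a$; that $b$ lies in $J \cap H(r)$, so again $J \models r$. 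Hence $J \models P^I$ with $J \subset I$, contradicting $I \in FLP(P)$.

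Putting the two parts together: $I \models_s P$ and for all $J \subset I$, $J \not\models_s P^I$, which is exactly $I \in SFLP(P)$.

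The proof is essentially routine; the only point requiring a moment's care is the case analysis in the second paragraph — specifically, recognizing that removing the unsupported atom $a$ from $I$ cannot falsify any rule of the reduct, because every reduct rule whose head met $I$ only in $\{a\}$ would have had to support $a$, contrary to assumption. No real obstacle is expected here, since this argument is the familiar one showing supportedness of classical stable/answer-set models, lifted verbatim to generalized atoms (the generalized-atom structure of bodies plays no role, as the reduct already fixes which bodies are satisfied).
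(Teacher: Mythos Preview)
Your proposal is correct, and in fact more complete than the paper's own proof. For the minimality condition you argue exactly as the paper does: from $J \not\models P^I$ for all $J \subset I$ one gets $J \not\models_s P^I$ immediately, since supportedness strengthens modelhood. The difference is that the paper's proof stops there and never explicitly verifies the first requirement of Definition~\ref{def:SFLP}, namely $I \models_s P$; you correctly flag this as the non-trivial part and supply the standard argument (if some $a \in I$ is unsupported, then $I \setminus \{a\}$ still satisfies every rule of $P^I$, contradicting FLP-minimality). That paragraph is a genuine addition: the paper later relies on ``$I \models_s P$ (see Definition~\ref{def:SFLP} and Theorem~\ref{thm:extension})'' in the proof of Theorem~\ref{thm:compilation}, so your extra step is exactly what is needed to back up that citation.
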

\begin{proof}
Let $I$ be an FLP answer set of $P$.
Hence, each $J \subset I$ is such that $J \not\models P^I$.
Thus, we can conclude that $J \not\models_s P^I$ for any $J \subset I$. Therefore, $I$ is a SFLP answer set of $P$.
\end{proof}

The inclusion is strict in general.
In fact, $P_{\ref{program:1}}$ is a simple program for which the two semantics disagree (see Examples~\ref{ex:program}--\ref{ex:SFLP} and Table~\ref{table:models}).
On the other hand, the two semantics are equivalent for a large class of programs, as shown below.

\begin{theorem}\label{thm:convex-equivalence}
If $P$ is a convex program then $FLP(P) = SFLP(P)$.
\end{theorem}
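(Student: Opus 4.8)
The plan is to combine Theorem~\ref{thm:extension}, which already yields $FLP(P) \subseteq SFLP(P)$ for every general program, with a proof of the reverse inclusion $SFLP(P) \subseteq FLP(P)$ that exploits convexity. So I would take an arbitrary $I \in SFLP(P)$ and show $I \in FLP(P)$. Since $I \models_s P$ implies $I \models P$, the only thing left is to establish that no proper subset of $I$ satisfies the reduct $P^I$. I would argue by contradiction: assuming some $J \subset I$ with $J \models P^I$, I would produce a set $K \subset I$ with $K \models_s P^I$, which contradicts $I \in SFLP(P)$ by Definition~\ref{def:SFLP}.

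The heart of the argument is the following claim, which I would isolate as a lemma: \emph{if $P$ is convex and $J \subset I$ with $J \models P^I$, then there is $K \subseteq J$ with $K \models_s P^I$.} To prove it, let $K$ be a $\subseteq$-minimal model of $P^I$ among the subsets of $J$ (such a minimal element exists; in the finite case this is immediate). Then $K \subseteq J \subset I$, so in particular $K \subsetneq I$. It remains to show $K$ is supported in $P^I$. Suppose some $a \in K$ admits no rule $r \in P^I$ with $K \cap H(r) = \{a\}$ and $K \models B(r)$; I would then show $K \setminus \{a\} \models P^I$, contradicting minimality. Fix $r \in P^I$. If $H(r)$ meets $K \setminus \{a\}$, the head of $r$ is already satisfied in $K \setminus \{a\}$. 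Otherwise $H(r) \cap K \subseteq \{a\}$; then either $H(r) \cap K = \emptyset$, in which case $K \models r$ forces $K \not\models B(r)$, or $H(r) \cap K = \{a\}$, in which case the supposition on $a$ again gives $K \not\models B(r)$. Here convexity enters: if we had $K \setminus \{a\} \models B(r)$, then from $K \setminus \{a\} \subsetneq K \subsetneq I$ together with $K \setminus \{a\} \models B(r)$ and $I \models B(r)$ (the latter because $r \in P^I$), convexity of $B(r)$ would force $K \models B(r)$, a contradiction. Hence $K \setminus \{a\} \not\models B(r)$ and $r$ is satisfied vacuously. Thus $K \setminus \{a\}$ is a model of $P^I$ strictly below $K$ and still contained in $J$, contradicting minimality; therefore $K \models_s P^I$.

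With the lemma in hand the theorem follows: given $I \in SFLP(P) \setminus FLP(P)$, pick $J \subset I$ with $J \models P^I$; the lemma yields $K \subseteq J \subset I$ with $K \models_s P^I$; but $I \in SFLP(P)$ requires $K \not\models_s P^I$ for every proper subset $K$ of $I$, a contradiction. Hence $SFLP(P) \subseteq FLP(P)$, which together with Theorem~\ref{thm:extension} gives the claimed equality.

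I expect the main obstacle to be the convexity step inside the lemma: one must observe that an unsupported atom can be deleted from a minimal model of the reduct without activating any previously inactive rule body, and the only way a body of $P^I$ could switch from false to true when passing from $K \setminus \{a\}$ up to $K$ is excluded precisely by convexity combined with the fact that every body of $P^I$ holds in $I$. A secondary point worth stating carefully is the interplay between supportedness with respect to the reduct and with respect to $P$, and, if full generality beyond the finite case is desired, justifying the existence of a $\subseteq$-minimal model of $P^I$ below $J$.
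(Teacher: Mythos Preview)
Your proposal is correct and follows essentially the same route as the paper: after invoking Theorem~\ref{thm:extension} for one inclusion, both arguments pick a $\subseteq$-minimal model of $P^I$ below $I$ and use convexity (via the sandwich $K\setminus\{a\}\subset K\subset I$ with $I\models B(r)$ from $r\in P^I$) to show that removing an unsupported atom keeps a model of the reduct, contradicting minimality. The only cosmetic differences are that you isolate the key step as a lemma and split the head case analysis explicitly, and that you start from $I\in SFLP(P)$ (which cleanly ensures $I\models P$) rather than phrasing the argument contrapositively.
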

\begin{proof}
$FLP(P) \subseteq SFLP(P)$ holds by Theorem~\ref{thm:extension}.
For the other direction, consider an interpretation $I$ not being an FLP answer set of $P$.
Hence, there is $J \subset I$ such that $J \models P^I$.
We also assume that $J$ is a subset-minimal model of $P^I$, that is, there is no $K \subset J$ such that $K \models P^I$.
We shall show that $J \models_s P^I$. 
To this end, suppose by contradiction that there is $a \in J$ such that for each $r \in P^I$ either $J \not\models B(r)$ or $J \cap H(r) \neq \{a\}$.
Consider $J \setminus \{a\}$ and a rule $r \in P^I$ such that $J \setminus \{a\} \models B(r)$.
Since $r \in P^I$, $I \models B(r)$, and thus $J \models B(r)$ because $B(r)$ is convex.
Therefore, $J \cap H(r) \neq \{a\}$.
Moreover, $J \cap H(r) \neq \emptyset$ because $J \models P^I$ by assumption.
Hence, $(J \setminus \{a\}) \cap H(r) \neq \emptyset$, and therefore $J \setminus \{a\} \models P^I$.
This contradicts the assumption that $J$ is a subset-minimal model of $P^I$.
\end{proof}

We will now focus on computational complexity. We consider here the
problem of determining whether an SFLP answer set exists. We note that
the only difference to the FLP semantics is in the stability
check. For FLP, subsets need to be checked for being a model, for
SFLP, subsets need to be checked for being a supported
model. Intuitively, one would not expect that this difference can
account for a complexity jump, which is confirmed by the next result.

\begin{theorem}\label{thm:complexity}
Let $P$ be a general program whose generalized atoms are polynomial-time computable functions.
Checking whether $SFLP(P) \neq \emptyset$ is in $\Sigma^P_2$ in general;
it is $\Sigma^P_2$-hard already in the disjunction-free case if at least one form of non-convex generalized atom is permitted.
The problem is $NP$-complete if $P$ is disjunction-free and convex.
\end{theorem}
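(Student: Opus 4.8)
The plan is to treat the three claims separately. For membership in $\Sigma^P_2$ I would use a guess-and-check argument, exploiting that every generalized atom of $P$ can be evaluated on a given interpretation in polynomial time. A nondeterministic machine guesses an interpretation $I$ over the finitely many atoms occurring in $P$ and checks $I \models_s P$ in polynomial time (equivalently $I \models comp(P)$, using the completion characterization), by evaluating each rule body and the supportedness condition. It then needs to verify that no $J \subset I$ satisfies $J \models_s P^I$; the reduct $P^I$ is computable in polynomial time, and the complementary property ``there exists $J \subset I$ with $J \models_s P^I$'' is in $NP$ (guess $J$, evaluate in polynomial time). So the whole test has the form $\exists I\,(\text{polynomial check} \wedge \text{co-}NP\text{ check})$ and lies in $\Sigma^P_2$.

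For the disjunction-free convex case, $NP$-hardness is inherited from ordinary normal logic programs, which are convex and disjunction-free and whose consistency is already $NP$-hard. For membership I would invoke Theorem~\ref{thm:convex-equivalence} to test $I \in FLP(P)$ rather than $I \in SFLP(P)$, and argue that this test is polynomial here: since $I \models P$, no constraint of $P$ has its body true in $I$, so $P^I$ contains only rules with a single head atom; moreover, on the interval of subsets of $I$ every reduct body is upward closed (if $J \models B(r)$ and $J \subseteq J' \subseteq I$ then $I \models B(r)$, and convexity gives $J' \models B(r)$), so $P^I$ behaves as a definite program on that interval, its least model $M$ is obtained by fixpoint iteration in polynomial time, and $I \in FLP(P)$ iff $M = I$. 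Guessing $I$ then puts the problem in $NP$.

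The hard part is $\Sigma^P_2$-hardness for disjunction-free programs with a single non-convex generalized atom. I would reduce from validity of a QBF $\Phi = \exists x_1 \cdots x_n\, \forall y_1 \cdots y_m\, \phi$ with $\phi$ in DNF. The guess of $X$ is encoded by pairs of rules $x_i \leftarrow \naf \bar x_i$ and $\bar x_i \leftarrow \naf x_i$; the $\forall Y$ quantifier, normally encoded by disjunctive facts $y_j \vee \bar y_j \leftarrow$, must instead be simulated \emph{without disjunction} by a single non-convex generalized atom over $y_1, \bar y_1, \ldots, y_m, \bar y_m$ (e.g.\ a $\mathit{COUNT}\{\cdots\} \neq k$ atom, or a sum atom) whose truth value flips as soon as any of these atoms is dropped, together with a witness atom $w$, a constraint $\leftarrow \naf w$, rules deriving $w$ from each disjunct of $\phi$ (using complementary atoms for negative literals), and auxiliary rules $y_j \leftarrow w$, $\bar y_j \leftarrow w$ restoring supportedness under saturation. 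The key lemma to prove is that the unique supported-model candidate $I_{\vec x}$ attached to a guess of $X$ --- the one setting all $y_j, \bar y_j, w$ true --- belongs to $SFLP(P)$ iff that guess makes $\forall Y\, \phi$ true, via a correspondence between supported proper sub-models of $P^{I_{\vec x}}$ and $Y$-assignments falsifying $\phi$. The delicate point, which I expect to be the real obstacle, is that --- unlike for FLP --- the inhibiting sub-model must itself be \emph{supported}, so the non-convex gadget has to be built so that it both genuinely blocks supported sub-models of the saturated interpretation when $\phi$ cannot be falsified, and makes the relevant non-saturated interpretation a supported (not merely classical) model of the reduct when $\phi$ can be falsified; the rules $y_j \leftarrow w$, $\bar y_j \leftarrow w$ are what is needed for the latter. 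An alternative route is to start from an existing $\Sigma^P_2$-hardness reduction for FLP answer sets of disjunction-free non-convex programs and verify additionally that its ``no''-instances also lack SFLP answer sets.
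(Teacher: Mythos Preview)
Your treatment of membership in $\Sigma^P_2$ and of the disjunction-free convex case is essentially the paper's argument (indeed more detailed: the paper simply cites Liu and Truszczy\'nski for the $NP$ case, while you spell out why the minimal-model test for $P^I$ is polynomial via monotonicity of convex bodies on $[\,\emptyset,I\,]$; that argument is correct).

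The substantive difference is in the $\Sigma^P_2$-hardness part. The paper does \emph{not} build a fresh QBF reduction. Instead it observes that if one adds the rule $a \leftarrow a$ for every propositional atom $a$ occurring in $P$, then every model of every reduct becomes supported: for any $J \subseteq I$ with $a \in J$, the rule $a \leftarrow a$ is in $P^I$ (since $a \in I$) and witnesses support of $a$ in $J$. Hence for the extended program one has $J \models_s P^I$ iff $J \models P^I$, so SFLP and FLP answer sets coincide; moreover tautological rules do not affect FLP answer sets, so the extended program has an SFLP answer set iff the original program has an FLP answer set. One then simply invokes the known $\Sigma^P_2$-hardness of FLP answer-set existence for disjunction-free programs with a non-convex generalized atom. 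The added rules keep the program disjunction-free and do not remove the non-convex atom, so the class is preserved.

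Your direct saturation construction is plausible but, as you yourself flag, the delicate point---ensuring that the non-saturated countermodel is \emph{supported} rather than merely classical---needs to be worked out in full, and the sketch with a single $\mathit{COUNT}\neq k$ gadget does not yet guarantee that the only supported proper submodels of the reduct correspond exactly to $Y$-assignments. Your ``alternative route'' (re-examining an existing FLP reduction) is closer to what the paper does, but still requires instance-by-instance verification that no-instances also lack SFLP answer sets. The $a \leftarrow a$ trick sidesteps all of this uniformly and in one line; it is also what underlies the paper's later remark that SFLP is sensitive to tautological rules.
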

\begin{proof}
For the membership in $\Sigma^P_2$ one can guess an interpretation $I$ and check that there is no $J \subset I$ such that $J \models_s P$.
The check can be performed by a $coNP$ oracle.

To prove $\Sigma^P_2$-hardness we note that extending a general program $P$ by rules $a \leftarrow a$ for every propositional atom occurring in $P$ is enough to guarantee that all models of any reduct of $P$ are supported.
We thus refer to the construction and proof by \cite{alvi-fabe-2013-lpnmr}.

If $P$ is disjunction-free and convex then $SFLP(P) = FLP(P)$ by Theorem~\ref{thm:convex-equivalence}.
Hence, $NP$-completeness follows from results in \cite{liu-trus-2006-jair}.
\end{proof}

We would like to point out that the above proof also illustrates a
peculiar feature of SFLP answer sets, which it shares with the
supported model semantics: the semantics is sensitive to tautological
rules like $a \leftarrow a$, as their addition can turn non-SFLP
answer sets into SFLP answer sets.

\section{Compilation}

The introduction of generalized atoms in logic programs does not increase the computational complexity of checking FLP as well as SFLP answer set existence, as long as one is allowed to use disjunctive rule heads.
However, so far no compilation method that compactly transforms general programs to logic programs without generalized atoms has been presented for the FLP semantics.
In the following we provide such a compilation for non-convex aggregates in disjunctive normal form.
The compilation is also extended for the new SFLP semantics.
We point out that such compilations are not necessarily intended to provide efficient methods for computing answer sets of general programs.
Their purpose is instead to provide insights that may lead to obtain such methods in the future. 

In this section we only consider generalized atoms in disjunctive normal form, that is, a generalized atom $A$ will be associated with an equivalent propositional formula of the following form:
\begin{equation}\label{eq:dnf}
\bigvee_{i=1}^k a_{i_1} \wedge \ldots \wedge a_{i_m} \wedge \naf a_{i_{m+1}} \wedge \ldots \wedge \naf a_{i_n}
\end{equation}
where $k \geq 1$, $i_{n} \geq i_{m} \geq 0$ and $a_{i_1}, \ldots, a_{i_n}$ are propositional atoms for $i = 1, \ldots, k$.
We will also assume that the programs to be transformed have atomic heads. 
To generalize our compilations to cover disjunctive general rules is a problem to be addressed in future work.

Let $P$ be a program.
In our construction we will use the following \emph{fresh} propositional atoms, i.e., propositional atoms not occurring in $P$:
$A^T$ for each generalized atom $A$;
$A^{F_i}$ for each generalized atom $A$ and integer $i \geq 0$.
For a generalized atom $A$ of the form (\ref{eq:dnf}) and integer $i = 1, \ldots, k$, let $tr(A, i)$ denote the following rule:
\begin{equation}\label{eq:tr}
A^T \vee a_{i_{m+1}} \vee \cdots \vee a_{i_n} \leftarrow a_{i_1}, \ldots, a_{i_m}, \naf A^{F_0}.
\end{equation}
Moreover, let $fls(A,i,j)$ denote
\begin{equation}\label{eq:fls-positive}
A^{F_i} \leftarrow \naf a_{i_j}, \naf A^T
\end{equation}
for $j = i_1, \ldots, i_m$, and
\begin{equation}\label{eq:fls-negative}
A^{F_i} \leftarrow a_{i_j}, \naf A^T
\end{equation}
for $j = i_{m+1}, \ldots, i_n$.
Abusing of notation, let $fls(A)$ denote the following rule:
\begin{equation}\label{eq:fls}
A^{F_0} \leftarrow A^{F_1}, \ldots, A^{F_k}, \naf A^T.
\end{equation}
Intuitively, rule $tr(A,i)$ forces truth of $A^T$ whenever the $i$-th disjunct of $A$ is true.
Similarly, rule $fls(A,i,j)$ forces truth of $A^{F_i}$ whenever the $i$-th disjunct of $A$ is false due to atom $a_{i_j}$;
if all disjuncts of $A$ are false, rule $fls(A)$ forces truth of $A^{F_0}$ to model that $A$ is actually false.
Note that atoms occurring in negative literals of the $i$-th disjunct of $A$ have been moved in the head of $tr(A,i)$.
In this way, the information encoded by $tr(A,i)$ is preserved in the reduct with respect to an interpretation $I$ whenever the $i$-th disjunct of $A$ is true with respect to a subset of $I$, not necessarily $I$ itself.

The rewriting of $A$, denoted $rew(A)$, is the following set of rules:
\begin{equation}
\begin{split}
\{tr(A,i) \mid i = 1, \ldots, k\} \cup \{fls(A)\} \cup {} & \\
\{fls(A,i,j) \mid i = 1, \ldots, k \wedge j = 1, \ldots, n\} &
\end{split}
\end{equation}
The rewriting of $P$, denoted $rew(P)$, is obtained from $P$ by replacing each generalized atom $A$ by $A^T$.
The FLP-rewriting of $P$, denoted $rew^{FLP}(P)$, is obtained from $rew(P)$ by adding rules in $rew(A)$ for each generalized atom $A$ occurring in $P$.
The SFLP-rewriting of $P$, denoted $rew^{SFLP}(P)$, is obtained from $rew^{FLP}(P)$ by adding a rule $supp(a)$ of the form
\begin{equation}\label{eq:completion}
A_1^T \vee \cdots \vee A_n^T \leftarrow a
\end{equation}
for each propositional atom $a$ occurring in $P$, where $a \leftarrow A_i$ ($i = 1, \ldots, n$) are the rules of $P$ having head $a$.

\begin{example}
Let $A$ be the generalized atom in Example~\ref{ex:program}.
Its disjunctive normal form is $\naf a \wedge \naf b \vee a \wedge b$.
Rules $r_1$ and $r_2$ are then $a \leftarrow A$ and $b \leftarrow A$.
Program $rew^{FLP}(P_{\ref{program:1}})$ is
$$
\begin{array}{rrcl}
rew(\{r_1\}): & a & \leftarrow & A^T\\
rew(\{r_2\}): & b & \leftarrow & A^T\\
tr(A,1): & A^T \vee a \vee b & \leftarrow & \naf A^{F_0}\\
tr(A,2): & A^T & \leftarrow & a, b, \naf A^{F_0}\\
fls(A,1,1): & A^{F_1} & \leftarrow & a, \naf A^T\\
fls(A,1,2): & A^{F_1} & \leftarrow & b, \naf A^T\\
fls(A,2,1): & A^{F_2} & \leftarrow & \naf a, \naf A^T\\
fls(A,2,2): & A^{F_2} & \leftarrow & \naf b, \naf A^T\\
fls(A): & A^{F_0} & \leftarrow & A^{F_1}, A^{F_2}, \naf A^T\\
\end{array}
$$
One can check that $rew^{FLP}(P_{\ref{program:1}})$ has no answer set. In particular,
$\{a,b,A^T\}$ is not an answer set of $rew^{FLP}(P_{\ref{program:1}})$. Its FLP reduct consists of the first four rules
$$
\begin{array}{rcl}
a & \leftarrow & A^T\\
b & \leftarrow & A^T\\
A^T \vee a \vee b & \leftarrow & \naf A^{F_0}\\
A^T & \leftarrow & a, b, \naf A^{F_0}
\end{array}
$$
and both $\{a\}$ and $\{b\}$ are minimal models of the reduct. On the other hand, neither $\{a\}$ nor $\{b\}$ are models of the original program, and so also not answer sets.

Program $rew^{SFLP}(P_{\ref{program:1}})$ extends $rew^{FLP}(P_{\ref{program:1}})$ with the following rules:
$$
\begin{array}{rrcl}
supp(a): & A^T & \leftarrow & a\\
supp(b): & A^T & \leftarrow & b
\end{array}
$$ 
The program $rew^{SFLP}(P_{\ref{program:1}})$ has one answer set:
\[
\{a,b,A^T\}.\]
 In contrast to $rew^{FLP}(P_{\ref{program:1}})$ its FLP
reduct now consists of the first four rules of
$rew^{FLP}(P_{\ref{program:1}})$ plus the two additional rules:
$$
\begin{array}{rcl}
a & \leftarrow & A^T\\
b & \leftarrow & A^T\\
A^T \vee a \vee b & \leftarrow & \naf A^{F_0}\\
A^T & \leftarrow & a, b, \naf A^{F_0}\\
A^T & \leftarrow & a\\
A^T & \leftarrow & b
\end{array}
$$
 These
two additional rules impede $\{a\}$ and $\{b\}$ to be models, and
indeed only $\{a,b,A^T\}$ is a model of the reduct.

Program $rew^{FLP}(P_{\ref{program:2}})$ is $rew^{FLP}(P_{\ref{program:1}}) \cup \{a \leftarrow b; b \leftarrow a\}$.
(To simplify the presentation, bodies equivalent to atomic literals are not rewritten.)

In this case, \[
\{a,b,A^T\}\] is its only answer set. Different to $rew^{FLP}(P_{\ref{program:2}})$, the additional rules will be present in the reduct for $\{a,b,A^T\}$:
$$
\begin{array}{rcl}
a & \leftarrow & A^T\\
b & \leftarrow & A^T\\
A^T \vee a \vee b & \leftarrow & \naf A^{F_0}\\
A^T & \leftarrow & a, b, \naf A^{F_0}\\
a & \leftarrow & b\\
b & \leftarrow & a
\end{array}
$$
Thus the reduct models $\{a\}$ and $\{b\}$ are avoided.

Program $rew^{SFLP}(P_{\ref{program:2}})$ extends $rew^{FLP}(P_{\ref{program:2}})$ with
$$
\begin{array}{rrcl}
supp(a)': & A^T \vee b & \leftarrow & a\\
supp(b)': & A^T \vee a & \leftarrow & b
\end{array}
$$
It is easy to see that these additional rules do not alter answer sets, so also $rew^{SFLP}(P_{\ref{program:2}})$ has a single answer set $\{a,b,A^T\}$.

Program $rew^{FLP}(P_{\ref{program:3}})$ is $rew^{FLP}(P_{\ref{program:1}}) \cup \{\leftarrow \naf a; \leftarrow \naf b\}$.
This program has no answer sets for the same reason as $rew^{FLP}(P_{\ref{program:1}})$. Indeed, the two additional rules are not in the reduct for $\{a,b,A^T\}$, and so $\{a\}$ and $\{b\}$ are again minimal models.

Program $rew^{SFLP}(P_{\ref{program:3}})$ is $rew^{SFLP}(P_{\ref{program:1}}) \cup \{\leftarrow \naf a; \leftarrow \naf b\}$. For the same reason as for $rew^{SFLP}(P_{\ref{program:1}})$, this program has exactly one answer set: 
\[
\{a,b,A^T\}.\]
 The two new rules disappear in the reduct, but the rules present in $rew^{SFLP}(P_{\ref{program:1}})$ but not in $rew^{FLP}(P_{\ref{program:1}})$ do not allow models $\{a\}$ and $\{b\}$.
    
Program $P_{\ref{program:4}}$ contains a disjunctive rule and is thus not in the domain of $rew^{FLP}$ and $rew^{SFLP}$ described here.
\end{example}

In the examples provided so far, it can be checked that answer sets are preserved by our transformations if auxiliary symbols are ignored.
In the remainder of this section we will formalize this intuition.

\begin{definition}
The expansion of an interpretation $I$ for a program $P$, denoted $exp(I)$, is the following interpretation:
\begin{equation}
\begin{split}
I & \cup \{A^T \mid A^T \mbox{ occurs in } rew(P),\ I \models A\} \\
{} & \cup \{A^{F_i} \mid A^{F_i} \mbox{ occurs in } rew(P),\ I \not\models A\}.
\end{split}
\end{equation}
The contraction of an interpretation $I$ to the symbols of $P$, denoted $I|_P$, is the following interpretation:
\begin{equation}
I \cap \{a \in \BP \mid a \mbox{ occurs in } P\}.
\end{equation}
\end{definition}

Below, we show that expansions and contractions define bijections between the answer sets of a program and those of the corresponding compilations.
In the claim we consider only FLP answer sets of the rewritten program because it is convex, and thus its FLP and SFLP answer sets coincide by Theorem~\ref{thm:convex-equivalence}.

\begin{theorem}\label{thm:compilation}
Let $P$ be a program, and ${\mathcal F} \in \{FLP, SFLP\}$.
\begin{enumerate}
\item
If $I \in {\mathcal F}(P)$ then $exp(I) \in FLP(rew^{\mathcal F}(P))$.

\item 
If $I \in FLP(rew^{\mathcal F}(P))$ then $I|_P \in {\mathcal F}(P)$.
\end{enumerate}
\end{theorem}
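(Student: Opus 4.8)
Fix $P$ and abbreviate $Q := rew^{\mathcal F}(P)$; since $Q$ is generalized-atom-free, $FLP(Q)$ consists exactly of the subset-minimal models of the reduct $Q^{\hat I}$. The argument rests on two correspondences — one between models, one between reducts — from which both implications follow. For Step 1 I would show, for every interpretation $I$ over the atoms of $P$, that $exp(I)\models rew^{FLP}(P)$ iff $I\models P$, and $exp(I)\models rew^{SFLP}(P)$ iff $I\models_s P$: this is a rule-by-rule check, using that a rewritten rule $a\leftarrow B^T$ has $B^T\in exp(I)$ iff $I\models B$, that each $tr(A,i)$, $fls(A,i,j)$, $fls(A)$ is satisfied by $exp(I)$ because membership of $A^T$ (resp.\ of the $A^{F_i}$) in $exp(I)$ is dictated precisely by whether $I\models A$, and that $exp(I)$ satisfies every $supp(a)$ iff every $a\in I$ has a rule $a\leftarrow A_i$ with $I\models A_i$. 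For Step 2 I would compute $Q^{exp(I)}$ explicitly: its $rew(P)$-part is $rew(P^I)$; for a generalized atom $A$ with $I\models A$ the reduct keeps exactly the $tr(A,i)$ whose positive part $a_{i_1},\dots,a_{i_m}$ lies in $I$ and drops all $fls$-rules for $A$; for $A$ with $I\not\models A$ it drops all $tr(A,i)$ (guarded by $\naf A^{F_0}$, and $A^{F_0}\in exp(I)$) and keeps the relevant $fls$-rules; and for $\mathcal F = SFLP$ it additionally keeps $supp(a)$ for each $a\in I$. The decisive consequence — the reason negated atoms of the $i$-th disjunct are pushed into the head of $tr(A,i)$ — is that, when $I\models A$, for every $J\subseteq I$ and every disjunct of $A$ true w.r.t.\ $J$, the corresponding $tr(A,i)$ is in the reduct and its head atoms other than $A^T$ are false in $J$, so it forces $A^T$ into every model of the reduct contained in $exp(I)$ whose restriction to the atoms of $P$ includes $J$; symmetric statements hold for $fls$ and the $A^{F_i}$.

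\textbf{From $\mathcal F(P)$ to $FLP(Q)$ (claim~1).} Let $I\in\mathcal F(P)$; by Step~1, $exp(I)\models Q$. Suppose for contradiction some $\hat J\subset exp(I)$ is a model of $Q^{exp(I)}$; take $\hat J$ subset-minimal and set $J:=\hat J|_P\subseteq I$. By Step~2 the $tr$-rules force $B^T\in\hat J$ whenever $J\models B$ for a rule $a\leftarrow B\in P^I$, whence $a\leftarrow B^T$ gives $a\in J$; so $J\models P^I$. If $\mathcal F = SFLP$ I must also show $J$ is supported in $P^I$: for $a\in J$ the rule $supp(a)$ is in the reduct and forces some $A_i^T\in\hat J$ with $A_i^T\in exp(I)$, hence $I\models A_i$ and $a\leftarrow A_i\in P^I$; tracing the support of that $A_i^T$ in the minimal model $\hat J$ through the $tr$- and $supp$-rules yields a rule $a\leftarrow A_j\in P^I$ with $J\models A_j$. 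Thus $J$ is a (supported) model of $P^I$; if $J\subset I$ this contradicts $I\in\mathcal F(P)$, and if $J=I$ then the $tr$- and $fls$-rules of the reduct force every auxiliary atom of $exp(I)$ into $\hat J$, so $\hat J=exp(I)$, contradicting $\hat J\subset exp(I)$. Hence $exp(I)\in FLP(Q)$.

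\textbf{From $FLP(Q)$ to $\mathcal F(P)$ (claim~2).} Let $\hat I\in FLP(Q)$ and $I:=\hat I|_P$. I would first prove $\hat I=exp(I)$: because $\hat I$ is a minimal model of its reduct, any auxiliary atom lacking a ``live'' supporting rule in $Q^{\hat I}$ can be deleted — an $A^T$ for which no $tr(A,\cdot)$ survives or whose surviving $tr$-witnesses are already satisfied through a head atom $\neq A^T$, and likewise a spurious $A^{F_i}$ — and a case analysis on whether $I\models A$ pins $\hat I$ down to exactly the auxiliary atoms of $exp(I)$. Step~1 then gives that $I$ is a model (a supported model, for $\mathcal F = SFLP$) of $P$. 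Finally, suppose some $J\subset I$ were a (supported, for SFLP) model of $P^I$; put $exp_I(J) := J \cup \{A^T \mid I\models A,\ J\models A\} \cup \{A^{F_i} \mid I\not\models A\}$. A rule-by-rule check — using $J\models P^I$ for the rewritten rules and the $tr$-rules, and, for SFLP, $J\models_s P^I$ for the $supp$-rules — shows $exp_I(J)\models Q^{exp(I)} = Q^{\hat I}$, while $exp_I(J)\subset\hat I$ since $J\subset I$, contradicting minimality of $\hat I$. Hence $I\in\mathcal F(P)$.

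\textbf{Main obstacle.} The two genuinely delicate points both require exploiting minimality of a model of the \emph{disjunctive} reduct: transferring supportedness from $\hat J$ to $P^I$ in claim~1 (the support chase may run through $supp$-rules and through the negated-atom heads of $tr$-rules, and one must rule out purely circular justifications — in effect an unfounded-set argument), and showing in claim~2 that an FLP answer set of the rewriting carries exactly the auxiliary atoms prescribed by $exp$. Everything else is bookkeeping, but that bookkeeping is substantial precisely because the FLP reduct retains negative body literals, so the behaviour of $tr(A,i)$ inside the reduct must be tracked jointly with the candidate subset $J$, not merely with $I$.
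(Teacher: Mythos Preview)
Your plan is sound and tracks the paper's argument closely. The two-step decomposition (model correspondence, then explicit reduct computation) is exactly how the paper organizes things, and your construction $exp_I(J)$ for claim~2 is literally the paper's smaller model (the paper phrases it as ``remove $X$ from $I$ and then drop every $A^T$ with $I|_P\setminus X\not\models A$'', which is your $exp_I(J)$ for $J=I|_P\setminus X$). Your identification of the two delicate points --- the supportedness transfer in claim~1 and the ``$\hat I=exp(I)$'' step in claim~2 --- matches the places where the paper spends the most effort (the latter is the paper's chain of properties (7)--(9)).

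The one genuine methodological difference is in the SFLP case of claim~1. You take $\hat J$ \emph{minimal} and then chase support of each $A_i^T\in\hat J$ back through $tr$- and $supp$-rules inside the rewritten reduct, aiming to land on a $tr$-rule and hence a disjunct of $A_i$ true in $J$. The paper instead takes an \emph{arbitrary} model $J$ of the rewritten reduct (not necessarily minimal), establishes $J|_P\models P^I$, and then performs the unfounded-set step directly in $P$: it lets $X\subseteq J|_P$ be a maximal set of atoms unsupported in $P^I$ relative to $J|_P\setminus X$, concludes $J|_P\setminus X\models_s P^I$, and hence $J|_P\setminus X=I$, so $X=\emptyset$. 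Your route works, but the cycle-breaking is messier because support in $\hat J$ can bounce between $supp$-rules, rewritten rules $a\leftarrow A^T$, and the displaced negative atoms in the heads of $tr$-rules; formalizing the ``rule out purely circular justifications'' step amounts to defining an unfounded set in $Q^{exp(I)}$ mixing original and auxiliary atoms. The paper's move --- dropping to $P$ before taking the unfounded set --- keeps the bookkeeping over original atoms only and avoids having to reason about minimality of $\hat J$ at all. Either way the core idea is the same unfounded-set elimination you name; the paper just applies it one level down.
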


\begin{proof}[Proof (item 1)]
Let $I$ be an $\mathcal F$ answer set of $P$.
Hence, $I \models_s P$ (see Definition~\ref{def:SFLP} and Theorem~\ref{thm:extension}). 
Since each generalized atom $A$ occurring in $P$ is replaced by $A^T$ in $rew(P)$, and $A^T \in exp(I)$ if and only if $I \models A$, we have $I \models rew(P)$.
Consider rules in $rew(A)$ for some generalized atom $A$ of the form (\ref{eq:dnf}) occurring in $P$, and note that either $A^T \in exp(I)$ or $A^{F_0}, \ldots, A^{F_k} \in exp(I)$.
In both cases, all rules in $rew(A)$ are satisfied by $exp(I)$.
Hence, $exp(I) \models rew^{FLP}(P)$.
%
Consider a rule $supp(a)$ of the form (\ref{eq:completion}) such that $a \in I$.
Since $I \models_s P$, there is $i \in \{1, \ldots, n\}$ such that $I \models A_i$.
Thus, $A_i^T \in exp(I)$, and therefore $exp(I) \models supp(a)$.
We can conclude $exp(I) \models rew^{SFLP}(P)$.

Let $J \subseteq exp(I)$ be such that $J \models rew^{\mathcal F}(P)^{exp(I)}$.
We first show that $J|_P = I$.
Consider a rule $a \leftarrow A$ in $P^I$ such that $I \models A$ and $J|_P \models A$, where $A$ is of the form (\ref{eq:dnf}).
Hence, there is $i \in \{1, \ldots, k\}$ such that \[J|_P \models a_{i_1} \wedge \ldots \wedge a_{i_m} \wedge \naf a_{i_{m+1}} \wedge \ldots \wedge \naf a_{i_n}.\]
Therefore, $A^T \in J$ because $tr(A,i) \in rew^{\mathcal F}(P)^{exp(I)}$, and consequently $a \in J$ because of rule $a \leftarrow A^T$ in $rew^{\mathcal F}(P)^{exp(I)}$.
We thus conclude $J|_P \models P^I$.
For ${\mathcal F} = FLP$, this already proves $J|_P = I$.
For ${\mathcal F} = SFLP$, let $X \subseteq J|_P$ be the atoms without support, i.e., $X$ is a subset-maximal set such that $a \in X$ implies $J|_P \setminus X \not\models A$ for each rule $a \leftarrow A$ in $P^I$.
Hence, $J|_P \setminus X \models_s P^I$.
It follows that $J|_P \setminus X = I$, i.e., $X = \emptyset$ and $J|_P = I$.

We can now show that $J = exp(I)$.
Let $A$ be a generalized atom of the form (\ref{eq:dnf}).
If $J|_P \models A$ there is $i \in \{1, \ldots, k\}$ such that \[J|_P \models a_{i_1} \wedge \ldots \wedge a_{i_m} \wedge \naf a_{i_{m+1}} \wedge \ldots \wedge \naf a_{i_n},\] and thus $A^T \in J$ because $tr(A,i) \in rew^{\mathcal F}(P)^{exp(I)}$ and $J \models rew^{\mathcal F}(P)^{exp(I)}$.
Otherwise, if $J|_P \not\models A$ then for all $i \in \{1, \ldots, k\}$ there is either $j \in \{1, \ldots, m\}$ such that $a_{i_j} \notin J|_P$, or $j \in \{m+1, \ldots, n\}$ such that $a_{i_j} \in J|_P$.
Hence, $A^{F_i} \in J$ because $J \models fls(A,i,j)$, and thus $A^{F_0} \in J$ because $J \models fls(A)$.
\end{proof}

\begin{proof}[Proof (item 2)]
Let $I$ be an FLP answer set of $rew^{\mathcal F}(P)$.
Let $A$ be a generalized atom $A$ of the form (\ref{eq:dnf}) occurring in $P$.
We prove the following statements:
\begin{eqnarray}
|I \cap \{A^T,A^{F_i}\}| \leq 1 \mbox{ holds for } i = 1, \ldots, k \label{eq:at-most-one}\\
A^T \in I \mbox{ if and only if } I|_P \models A \label{eq:iff}\\
|I \cap \{A^T,A^{F_i}\}| = 1 \mbox{ holds for } i = 1, \ldots, k \label{eq:one}
\end{eqnarray}

To prove (\ref{eq:at-most-one}), 
define set $X$ as a maximal subset satisfying the following requirements:
If $\{A^T,A^{F_i}\} \subseteq I$ (for some $i \in \{1,\ldots,k\}$) then $\{A^T,A^{F_0},\ldots,A^{F_k}\} \subseteq X$;
if an atom $a$ is not supported by $I \setminus X$ in $rew^{FLP}(P)^I$ then $a \in X$.
We have $I \setminus X \models rew^{\mathcal F}(P)^I$, from which we conclude $X = \emptyset$.

Consider (\ref{eq:iff}).
If $A^T \in I$ then by (\ref{eq:at-most-one}) no $A^{F_i}$ belongs to $I$.
Recall that FLP answer sets are supported models, i.e., $I \models_s rew^{\mathcal F}(P)$.
Thus, for ${\mathcal F} = FLP$, there is $i \in \{1,\ldots,k\}$ such that $I \models B(tr(A,i))$ and $I \cap H(tr(A,i)) = \{A^T\}$.
Therefore, $I|_P \models A$.
For ${\mathcal F} = SFLP$, we just note that if $A^T$ is supported only by a rule of the form (\ref{eq:completion}), then atom $a$ is only supported by a rule $a \leftarrow A^T$ in $rew^{\mathcal F}(P)$.
$I \setminus \{a, A^T\}$ would be a model of $rew^{\mathcal F}(P)^I$ in this case, then contradicting $I \in FLP(rew^{\mathcal F}(P))$.
Now consider the right-to-left direction.
If $I|_P \models A$ then there is $i \in \{1, \ldots, k\}$ such that $I|_P \models a_{i_1} \wedge \ldots \wedge a_{i_m} \wedge \naf a_{i_{m+1}} \wedge \ldots \wedge \naf a_{i_n}$, and thus $A^{F_i} \notin I$ (see Equations~\ref{eq:fls-positive}--\ref{eq:fls-negative}).
Hence, $A^{F_0} \notin I$ (see Equation~\ref{eq:fls}).
From rule $tr(A,i)$ (see Equation~\ref{eq:tr}) we have $A^T \in I$.

Concerning (\ref{eq:one}), because of (\ref{eq:at-most-one}) and (\ref{eq:iff}), we have just to show that $A^{F_0},\ldots,A^{F_k} \in I$ whenever $I|_P \not\models A$.
In fact, in this case $A^T \notin I$ by (\ref{eq:iff}), and for each $i \in \{1,\ldots,k\}$ there is either $j \in \{1, \ldots, m\}$ such that $a_{i_j} \notin I|_P$, or $j \in \{m+1, \ldots, n\}$ such that $a_{i_j} \in I|_P$.
Hence, $A^{F_i} \in I$ because of rules $fls(r,i,j)$ and $fls(r)$.

We can now prove the main claim.
We start by showing that $I|_P \models P$.
Indeed, for a rule $a \leftarrow A$ in $P$ such that $I|_P \models A$, $rew(P)$ contains a rule $a \leftarrow A^T$.
Moreover, $A^T \in I$ by (\ref{eq:iff}), and thus $a \in I$.
If ${\mathcal F} = SFLP$, then for each $a \in I$ we have $I \models supp(a)$, where $supp(a)$ is of the form (\ref{eq:completion}). Hence, there is $i \in {1,...,n}$ such that $A_i^T \in I$. Therefore, (\ref{eq:iff}) implies $I|_P \models A_i$, that is, $a$ is supported by $I|_P$ in $P$. We can thus conclude that $I_P \models_s P$.

To complete the proof, for ${\mathcal F} = FLP$ we consider $X \subseteq I|_P$ such that $I|_P \setminus X \models P^{I|_P}$, while for ${\mathcal F} = SFLP$ we consider $X \subseteq I|_P$ such that $I|_P \setminus X \models_s P^{I|_P}$.
Let $J$ be the interpretation obtained from $I \setminus X$ by removing all atom $A^T$ such that $I|_P \setminus X \not\models A$.
We shall show that $J \models rew^{\mathcal F}(P)^I$, from which we conclude $X = \emptyset$.
Consider a rule of the form $a \leftarrow A^T$ in $rew^{\mathcal F}(P)^I$ such that $A^T \in J$.
Hence, $I|_P \setminus X \models A$ by construction of $J$.
Since $a \leftarrow A$ is a rule in $P^{I|_P}$, we conclude $a \in I|_P \setminus X$ and thus $a \in J$.
Consider now a rule $tr(A,i)$ in $rew^{\mathcal F}(P)^I$ such that $J \models B(tr(A,i))$ and $A^T \notin J$.
Hence, $I|_P \setminus X \not\models A$ by construction of $J$, which means that there is either $j \in \{1, \ldots, m\}$ such that $a_{i_j} \notin I|_P \setminus X$, or $j \in \{m+1, \ldots, n\}$ such that $a_{i_j} \in I|_P \setminus X$.
We conclude that $J \models tr(A,i)$.
Rules $fls(A,i,j)$ and $fls(A)$ are satisfied as well because no $A^{F_i}$ has been removed.
For ${\mathcal F} = SFLP$, consider a rule $supp(a)$ of the form (\ref{eq:completion}) such that $a \in J$.
Since $I|_P \setminus X \models_s P^{I|_P}$, there is rule $a \leftarrow A$ in $P^{I|_P}$ such that $I|_P \setminus X \models A$.
Hence, by construction of $J$, $A^T \in J$ and thus $J \models supp(a)$.
\end{proof}

\section{Conclusion}

In this paper, we have first defined a new semantics for programs with
generalized atoms, called supportedly stable models, supportedly FLP,
or SFLP semantics. We have motivated its definition by an anomaly that
arises for the FLP semantics in connection with non-convex generalized
atoms. In particular, only unsupported models may in particular cases
inhibit the stability of candidate models. The new definition
overcomes this anomaly and provides a robust semantics for programs
with generalized atoms. We show several properties of this new
semantics, for example it coincides with the FLP semantics (and thus
also the PSP semantics) on convex programs, and thus also on standard
programs. Furthermore, the complexity of reasoning tasks is equal to
the respective tasks using the FLP semantics. We also provide a
characterization of the new semantics by a Clark-inspired completion.

We observe that other interesting semantics, such as the one by
\cite{ferr-2005-lpnmr}, are also affected by the anomaly on
unsupported models.  In particular, the semantics by
\cite{ferr-2005-lpnmr} is presented for programs consisting of
arbitrary set of propositional formulas, and it is based on a reduct
in which false subformulas are replaced by $\bot$.  Answer sets are
then defined as interpretations being subset-minimal models of their
reducts.  For the syntax considered in this paper, when rewriting
generalized atoms to an equivalent formula, the semantics by
\cite{ferr-2005-lpnmr} coincides with FLP, which immediately shows the
anomaly.  In \cite{ferr-2005-lpnmr} there is also a method for
rewriting aggregates, however $COUNT(\{a,b\}) \neq 1$ is not
explicitly supported, but should be rewritten to $\neg (COUNT(\{a,b\})
= 1)$. Doing this, one can observe that for $P_{\ref{program:1}}$,
$P_{\ref{program:2}}$, $P_{\ref{program:3}}$, and $P_{\ref{program:5}}$ the semantics of
\cite{ferr-2005-lpnmr} behaves like SFLP
(cf.\ Table~\ref{table:models}), while for $P_{\ref{program:4}}$ the
semantics of \cite{ferr-2005-lpnmr} additionally has the answer set
$\{a,b\}$, which is not a supported minimal model of the FLP
reduct. $P_{\ref{program:4}}$ therefore shows that the two semantics
do not coincide, even if generalized atoms are interpreted as their negated
complements, and the precise relationship is left
for further study. However, we also believe that rewriting a
generalized atom into its negated complement is not always natural,
and we are also not convinced that there should be a semantic
difference between a generalized atom and its negated complement.

The second part of the paper concerns the question of compactly
compiling generalized atoms away, to arrive at a program that contains
only traditional atoms whose answer sets are in a one-to-one
correspondence with the original program. Previously existing
complexity results indicated that such a translation can exist, but
that it has to make use of disjunction in rule heads. However, no such
method is currently known. We show that similar techniques can
be used for both FLP and the new SFLP semantics when non-convex aggregates 
are represented in disjunctive normal form.

Concerning future work, implementing a reasoner supporting the new
semantics would be of interest. However, we believe that it would
actually be more important to collect example programs that contain
non-convex generalized atoms in recursive definitions. We have
experimented with a few simple domains stemming from game theory (as
outlined in the introduction), but we are not aware of many other
attempts. Our intuition is that such programs would be written in
several domains that describe features with feedback loops, which
applies to many so-called complex systems. Also computing or checking
properties of neural networks might be a possible application in this
area. Another, quite different application area could be systems that
loosely couple OWL ontologies with rule bases, for instance by means
of HEX programs. HEX atoms interfacing to ontologies will in general
not be convex, and therefore using them in recursive definitions falls
into our framework, where the FLP and SFLP semantics differ.

Another area of future work arises from the fact that rules like $a
\leftarrow a$ are not irrelevant for the SFLP semantics. To us, it is
not completely clear whether this is a big drawback. However, we
intend to study variants of the SFLP semantics that do not exhibit
this peculiarity.

\bibliography{bibtex}
\bibliographystyle{aaai}
\end{document}